\newtheorem{definition}{Definition}
\newtheorem{example}{Example}
\newtheorem{proposition}{Proposition}
\newtheorem{problem}{Problem}
\newcommand{\AF}{\mathcal{F}}
\newcommand{\A}{\mathcal{A}}
\newcommand{\D}{\mathcal{D}}
\newcommand{\Dec}{\mathcal{DEC}}
\newcommand{\Ssp}{\mathcal{SSP}}
\newcommand{\kSsp}{k\mathcal{SSP}}
\newcommand{\Att}{\mathtt{Att}}
\title{Inferring Attack Relations for Gradual Semantics
}
\author{
  Nir Oren\\
  University of Aberdeen \\
  Scotland\\
  \texttt{n.oren@abdn.ac.uk} \\
   \And
  Bruno Yun \\
  University of Aberdeen\\
  Scotland\\
  \texttt{bruno.yun@abdn.ac.uk} \\
}
\begin{document}
\maketitle

\begin{abstract}
A gradual semantics takes a weighted argumentation framework as input and outputs a final acceptability degree for each argument, with different semantics performing the computation in different manners. In this work, we consider the problem of attack inference. That is, given a gradual semantics, a set of arguments with associated initial weights, and the final desirable acceptability degrees associated with each argument, we seek to determine whether there is a set of attacks on those arguments such that we can obtain these acceptability degrees. The main contribution of our work is to demonstrate that the associated decision problem, i.e., whether a set of attacks can exist which allows the final acceptability degrees to occur for given initial weights, is NP-complete for the weighted h-categoriser and cardinality-based semantics, and is polynomial for the weighted max-based semantics, even for the complete version of the problem (where all initial weights and final acceptability degrees are known). We then briefly discuss how this decision problem can be modified to find the attacks themselves and conclude by examining the partial problem where not all initial weights or final acceptability degrees may be known.
\end{abstract}

\keywords{Gradual Semantics \and Argumentation \and Complexity}

\section{Introduction}
Abstract argumentation semantics associate a justification status with a set of arguments based on interactions between arguments. Such interactions can include inter-argument attacks \cite{dung_acceptability_1995}, or preference-based defeats \cite{modgil_general_2013,kaci_working_2011,yun_arguing_2016, amgoud_two_2011,amgoud_rich_2014}, or may consider both of these together with some supportive relationship \cite{amgoud_bipolarity_2008,amgoud_weighted_2018,cayrol_acceptability_2005,mossakowski_modular_2018}. Given such a framework, an argument may be considered justified if it appears in one (resp. many or all) \emph{extensions}, where such extensions -- sets of non-conflicting arguments accepted together -- are computed according to the argumentation semantics, or based on the label assigned to the argument (we refer the reader to \cite{baroni_introduction_2011} for an introduction to 
 the most common argumentation semantics).

Arguments often have associated weights, and different semantics have been proposed which consider the weight associated with an argument \cite{coste-marquis_selecting_2012,amgoud_weighted_2018,mossakowski_modular_2018,inverse_problem}. While some semantics developed in this context return whether an argument is, or is not justified \cite{coste-marquis_selecting_2012}, many other \emph{ranking-based} semantics either return a preference ordering over arguments by making use of the argument's initial weight or associate a numeric \emph{final acceptability degree} with each argument in the framework. Such ranking-based semantics have been used to --- for example --- identify irrationality in reasoning \cite{irwin22forecasting} by examining whether the initial weights associated with an argument affect the argument's final acceptability degree in an appropriate manner (i.e., consistent with the ranking-based semantics being used); refining the results obtained by extension semantics \cite{bonzon_combining_2018,yun_viewpoints_2018}; and applied to multi-agent settings \cite{DBLP:conf/atal/TarleBM22}.

The top portion of Fig. \ref{fig:process} shows the reasoning process used when reasoning using gradual semantics. Here, a weighted argumentation framework consisting of arguments, attacks and initial weights associated with arguments is provided as input. A gradual semantics is then used to compute a \emph{final acceptability degree} for each argument. In many semantics, these final acceptability degrees are then used to compute a preference ordering over the arguments.

More recent work has considered different combinations of inputs and outputs to the problem. For example, \cite{inverse_problem} seeks to identify a set of initial weights for arguments based on the final argument preference ordering and argumentation semantics, while \cite{MahesarOV20} determines the preferences between arguments given argument justification status, semantics and argumentation framework. In this paper, and as illustrated in the bottom portion of Fig. \ref{fig:process}, we consider the problems of determining whether a set of attacks between arguments can be identified given specific argumentation semantics, the final acceptability degrees, and the initial weights. As discussed further in Section \ref{sec:Discussion}, we leave the problem of using preferences over arguments as input for our problem as future work. While it is true that the problem we consider is somewhat abstract and has limited real-world applications, it serves as a departure point for potentially important applications of argumentation to opponent modelling \cite{reinstra13opponent,POLBERG2018487} and preference elicitation \cite{MahesarOV20}.

\begin{figure}
    \centering
    \includegraphics[width=0.7\textwidth]{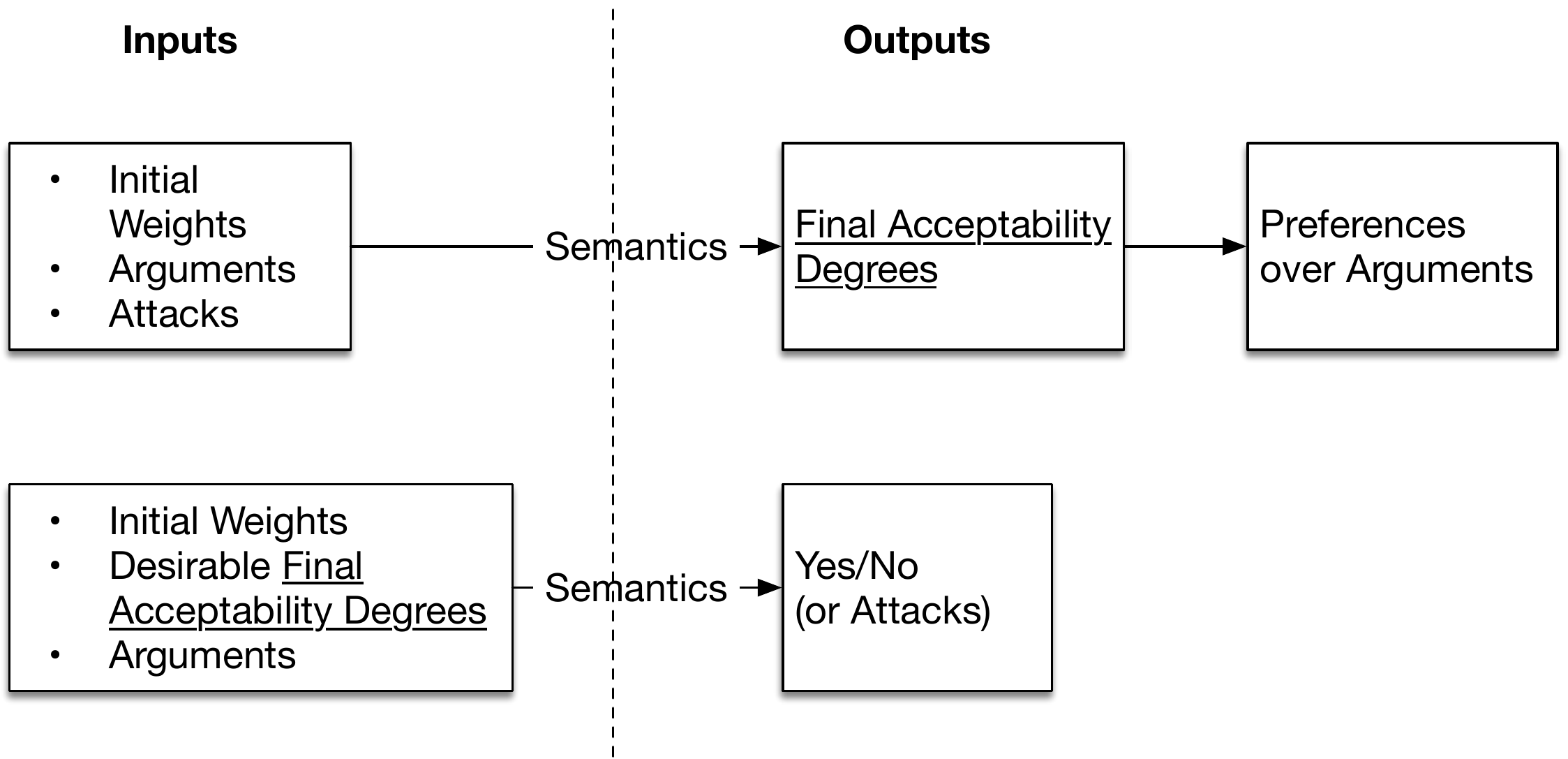}
    \caption{The process (top) by which a gradual semantics is applied to compute an acceptability preference between arguments and (bottom) the inverse problem considered in this paper. The ``final acceptability degrees" are now used as input. That is, given arguments, initial weights, and desirable final acceptability degrees, can we find a suitable set of attacks?}
    \label{fig:process}
\end{figure}

While other ranking-based semantics are described in the literature, we consider three popular gradual semantics (the weighted h-categoriser, the weighted card-based, and the weighted max-based semantics \cite{AMGOUD2022103607}). These three semantics were chosen due to their similarity to one another, and due to the way in which a similar approach can be used to solve the problem under consideration when these semantics are used. We show that when given all final acceptability degrees and initial weights, the problems for weighted h-categoriser and weighted card-based semantics are both NP-complete, while the problem can be solved in polynomial time for the weighted max-based semantics. 

The remainder of this paper is structured as follows. In Section \ref{sec:background}, we provide the necessary background to understand the remainder of the paper, introducing the h-categoriser semantics. Section \ref{sec:infer_attacks} formalises the decision problem of whether attacks can be found for gradual semantics and then demonstrates that it is NP-complete for some semantics. In Section \ref{sec:solvers} we discuss solvers for our problem, and consider related and future work (including a partial version of the problem) in Section \ref{sec:Discussion}.


\section{Background}
\label{sec:background}

As discussed above, we situate our work in the context of \emph{weighted} abstract argumentation frameworks (WAFs), which can be encoded as graphs with weighted nodes.
Each argument has an initial weight (also called ``basic score'' in \cite{rago_discontinuity-free_2016}) from $[0, 1]$.
The smaller the initial weight of an argument, the weaker the argument. 
The initial weight of an argument may represent different issues like the likelihood degree of its premises \cite{BENFERHAT1993411}, the degree of trust in its source \cite{Pereira2011164}, or an aggregation of votes provided by users \cite{leite_social_2011} among others.
In this paper, the origin of the weights and arguments is left unspecified. Similarly, arguments and attacks are considered abstract notions.

\begin{definition}
A \emph{weighted argumentation framework}  is a triple $\AF=\langle \A,\D, w \rangle$ where $\A$ is a \emph{finite} set of arguments, $\D \subseteq \A \times \A$ is a binary attack relation, and $w:\A \to [0,1]$ is a total weighting function which associates an \emph{initial weight} between $0$ and $1$ to each argument. 

\end{definition}

Given an argument $a \in \A$, we refer to $w(a)$ as $a$'s \emph{initial weight}. For any argument $a \in \A$, the set $\{b \in \A | (b,a) \in \D\}$, denoted by $\Att(a)$, contains all attackers of $a$. Similarly, we define $\Att^*(a) = \{b \in \A | (b,a) \in \D$ and $w(b) > 0\}$, i.e., the set of attackers of $a$ with strictly positive weights. When the current argumentation framework $\AF$ is not clear from the context, we will use the notation $\Att_{\AF}$ and $\Att^*_{\AF}$ respectively.

A gradual semantics $\sigma$ takes as input a weighted argumentation framework and outputs a function that associates a \emph{final acceptability degree} for each argument\footnote{Often, the final step in using a gradual semantics involves using this final acceptability degree to compute a preference ordering over arguments.}. Several such semantics have been described in the literature. In this paper, we focus on three widely used gradual semantics introduced by Amgoud et al., i.e. the weighted h-categoriser, the weighted max-based, and the weighted card-based semantics, denoted $\sigma_{HC}$, $\sigma_{MB}$ and $\sigma_{CB}$ respectively \cite{AMGOUD2022103607}.

The weighted h-categoriser considers the initial weight of the argument as well as the sum of the acceptability degrees of all its attackers to determine the acceptability degree of the argument. In this semantics, all (non-zero) attackers will be considered when determining the acceptability degree of an argument and the number of attackers is not used.

\begin{definition}
The weighted h-categoriser semantics, denoted $\sigma_{HC}$, is the function that takes as input any weighted argumentation framework $\AF=\langle \A,\D, w \rangle$ and returns the function $\sigma^\AF_{HC}:A \to [0,1]$ such that for all $a \in \A, \sigma^\AF_{HC}(a) = HC_\infty(a)$, where:

\[
HC_i(a) = \begin{cases}
  w(a) & \text{if } i=0 \\
  \frac{w(a)}{1+\sum_{b \in \Att(a)} HC_{i-1}(b)} & \text{otherwise}
\end{cases}
\]
\end{definition}

The next semantics is the weighted max-based which considers the initial weight of the argument as well as the highest acceptability degrees of its attackers to determine the acceptability degree of the considered argument. In this semantics, only the stronger attacker is considered and the number of attackers is not used.

\begin{definition}
The weighted max-based semantics, denoted $\sigma_{MB}$, is the function that takes as input any weighted argumentation framework $\AF=\langle \A,\D, w \rangle$ and returns the function $\sigma^\AF_{MB}:A \to [0,1]$ such that for all $a \in \A, \sigma^\AF_{MB}(a) = MB_\infty(a)$, where:

\[
MB_i(a) = \begin{cases}
  w(a) & \text{if } i=0 \\
  \frac{w(a)}{1+max_{b \in \Att(a)} MB_{i-1}(b)} & \text{otherwise}
\end{cases}
\]
\end{definition}

The last semantics studied in this paper is the weighted card-based which considers the initial weight of the argument, the number of its attackers as well as the sum of the acceptability degrees of its attackers to determine the acceptability degree of the considered argument. In this semantics, the number of attackers is the most important factor to determine the acceptability degree of an argument.

\begin{definition}
The weighted card-based semantics, denoted $\sigma_{CB}$, is the function that takes as input any weighted argumentation framework $\AF=\langle \A,\D, w \rangle$ and returns the function $\sigma^\AF_{CB}:A \to [0,1]$ such that for all $a\in \A, \sigma^\AF_{CB}(a)  = CB_\infty(a)$, where:

\[
CB_i(a) = \begin{cases}
  w(a) & \text{if } i=0 \\
  \frac{w(a)}{1+|\Att^*(a)|+\frac{\sum_{b \in \Att^*(a)} CB_{i-1}(b)}{|\Att^*(a)|}} & \text{otherwise}
\end{cases}
\]

Note that by convention, if $|\Att^*(a)|=0$, then we set $CB_i(a) = \frac{\sum_{b \in \Att^*(a)} CB_{i-1}(b)}{|\Att^*(a)|}=0$.

\end{definition}

We note in passing that the properties for these semantics have been researched in depth by Amgoud et al.\ in \cite{AMGOUD2022103607}. Perhaps the most important property, in the context of this paper, is the convergence of the $\sigma_{HC}, \sigma_{MB}$ and $\sigma_{CB}$ semantics for finite frameworks (see Theorem 7, 12, 17 in \cite{AMGOUD2022103607}). A proof of convergence for a broader class of semantics (including these three) was proved in the work of Oren et al. \cite{oren_analytical}. This means that for any given argumentation framework, semantics and initial weights, the final acceptability degrees of all arguments can always be computed.

\begin{example}\label{ex:ex1}
An agent may believe the following arguments.

\begin{itemize}
    \item[$a_0$:] Tomatoes older than a week can go rotten; these tomatoes are a week and a half old.
    \item[$a_1$:] Tomatoes kept in the fridge (like these ones) can last longer than a week, and so the tomatoes are not rotten.
    \item[$a_2$:] My friend ate one of the tomatoes this morning, and it tasted fine, therefore they are not rotten.
    \item[$a_3$:] My friend is not very good at discriminating whether something is rotten by taste, so the tomatoes might be rotten.
\end{itemize}

Here, $a_0$ is attacked by $a_1$ and $a_2$, while the latter is attacked by $a_3$. The agent ascribes each argument with an initial weight encoding the agent's belief in the applicability or strength of the argument, i.e., $w(a_0)=0.9$, $w(a_1) = 0.7$, $w(a_2)=0.7$, and $w(a_3)=0.6$. Fig. \ref{fig:af} illustrates this weighted argumentation framework $\AF = \langle \{a_0,a_1, a_2, a_3\}, \{ (a_1, a_0), (a_2, a_0), (a_3, a_2)\}, w \rangle$.

\begin{figure}
    \centering
    \includegraphics[width=0.17\textwidth]{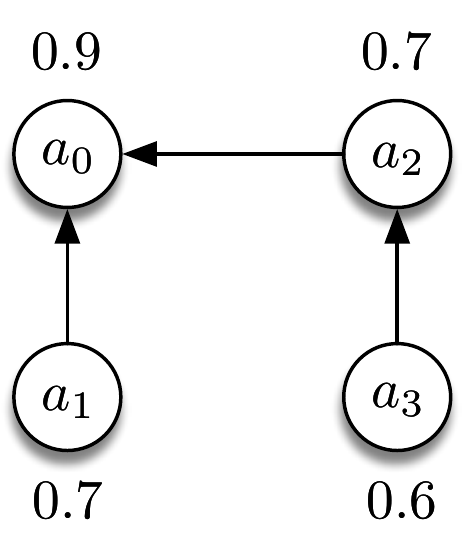}    \caption{The weighted argumentation framework from Example \ref{ex:ex1}.}
    \label{fig:af}
\end{figure}

We consider three different semantics in this paper (defined above); yielding the final acceptability degrees in Table \ref{tab:accpt_degrees}. In turn, a reasoner using the weighted h-categoriser semantics $\sigma_{HC}$ or $\sigma_{CB}$ semantics would have preferences $a_1 \succ a_3 \succ a_2 \succ a_0$, while if it were to use $\sigma_{MB}$  preferences would be $a_1 \succ a_3 \succ a_0 \succ a_2$. Note that while the $\sigma_{HC}$ and $\sigma_{CB}$ semantics yield similar preference orderings, the agent would be less sure of its conclusions in the latter case (due to the smaller difference in weights).

\begin{table}[]
    \centering
    \begin{tabular}{|c|l|l|l|}
\hline
Argument & $\sigma_{HC}^\AF(a_i)$ & $\sigma_{MB}^\AF(a_i)$ & $\sigma_{CB}^\AF(a_i)$ \\
\hline
$a_0$ & 0.421 & 0.529  & 0.258  \\
$a_1$ & 0.7  & 0.7 & 0.7\\
$a_2$ & 0.438 & 0.438 & 0.269\\
$a_3$ & 0.6 & 0.6  & 0.6 \\
\hline
\end{tabular}
\caption{\label{tab:accpt_degrees} Final acceptability degrees for each argument of Example \ref{ex:ex1} (Fig. \ref{fig:af}) for $\sigma_{HC}, \sigma_{MB},$ and $\sigma_{CB}$.}
\end{table}

\end{example}

\section{Inferring Attacks}
\label{sec:infer_attacks}

In this paper, we focus on the problem of identifying whether we can infer suitable attacks between arguments given their initial weights and desirable final acceptability degrees (as shown at the bottom of Fig. \ref{fig:process}). 
However, we ground most of our discussion in the closely related decision problem. Given a gradual argumentation semantics, a set of arguments, and associated information about these arguments (i.e., initial weights and some or all desirable final acceptability degrees), can we determine the attacks between arguments that --- according to the semantics --- lead to the desirable final acceptability degrees from the initial weights.  

In this section, we consider the \emph{complete problem}, where the initial weights and desirable final acceptability degrees for \emph{all} arguments are known. We formalise this problem as follows.

\begin{problem}
The decision problem $\Dec^x_c$ is: \emph{Given a set of arguments $\A$, a gradual semantics $\sigma_x$, a weighting function $w$, and a total desired final acceptability degree function $S:\A \to [0,1]$, is there a set of attacks $\D \subseteq \A \times \A$ such that for all $a \in \A, \sigma^\AF_x(a)=S(a)$, where $\AF=\langle \A, \D, w \rangle$?}
\end{problem}

We investigate this decision problem for the three gradual semantics $\sigma_{x}$, for $x \in \{HC, CB, MB\}.$

\begin{proposition}
$\Dec_c^{MB}$ is polynomial and can be decided in $O(n)$ time and space.
\label{prop:MB-poly}
\end{proposition}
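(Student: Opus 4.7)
The plan is to reduce the problem to a per-argument local consistency check. By convergence of $\sigma_{MB}$ (Theorem 12 of \cite{AMGOUD2022103607}), for any weighted argumentation framework the final acceptability function is the unique fixed point of the defining recursion, so a candidate attack set $\D$ solves the instance if and only if $S$ itself is that fixed point, i.e.\ for every $a \in \A$,
\[
S(a) \;=\; \frac{w(a)}{1 + \max_{b \in \Att(a)} S(b)},
\]
with the convention that the maximum over an empty set equals $0$.

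First, I would scan $\A$ once and reject the instance whenever any of the following obstructions holds for some $a$: $S(a) > w(a)$ (since $\sigma_{MB}$ values never exceed the initial weight), $w(a) > 0$ with $S(a) = 0$, or $S(a) < w(a)/2$ (which would force a required maximum attacker value strictly greater than $1$, unreachable as an acceptability degree).

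Next, for each $a$ with $S(a) < w(a)$ I set $m_a := w(a)/S(a) - 1 \in (0, 1]$; the fixed-point equation then becomes $\max_{b \in \Att(a)} S(b) = m_a$. The instance is feasible precisely when, for every such $a$, some argument $b \in \A$ satisfies $S(b) = m_a$. Given this, $\D$ is built by selecting one such $b_a$ per argument and adding the edge $(b_a, a)$; arguments with $S(a) = w(a)$ are left unattacked, which is consistent with the convention above. Correctness is immediate from the fixed-point characterization.

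For the complexity, preprocess the multiset $\{S(b) : b \in \A\}$ into a hash table in $O(n)$ time and space so that each query ``does some $b$ satisfy $S(b) = m_a$?'' is $O(1)$ expected. Each per-argument step is $O(1)$ and $\D$ contains at most $|\A|$ edges, yielding the claimed $O(n)$ time and space bound. The main obstacle is the very first step: justifying that local fixed-point consistency is both necessary and sufficient for $S$ to be the limit of the iterative semantics, which rests entirely on the cited uniqueness/convergence result. A minor subtlety is the treatment of the degenerate cases $w(a) = 0$ (which forces $S(a) = 0$ irrespective of attackers) and $m_a \in \{0, 1\}$ at the boundary.
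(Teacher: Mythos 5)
Your proposal is correct and follows essentially the same route as the paper: after screening the degenerate zero-weight/zero-degree cases, it reduces each argument to an $O(1)$ membership query of $w(a)/S(a)-1$ against a preprocessed hash set of the degrees $\{S(b)\mid b\in\A\}$, which is exactly the paper's lookup of $w(a)/S(a)$ in $L=\{1+S(b)\mid b\in\A\}$. You are somewhat more explicit than the paper in justifying sufficiency via the fixed-point/uniqueness characterization and in letting arguments with $S(a)=w(a)$ simply remain unattacked, but the algorithm and the $O(n)$ analysis are the same.
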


\begin{proof}
We create the set $L = \{1+S(b) \mid b \in \A\}$, for which an $O(1)$ lookup can be performed. We can then use the following trivial algorithm to solve $\Dec_c^{MB}$:

\begin{algorithmic}[1]
\ForAll{$a \in \A$}
\If{($w(a) = 0$ and $S(a) \neq 0$) or ($w(a) \neq 0$ and $S(a) = 0$)} \label{line-cond-zero}
\State \Return False \Comment{A zero initial weight cannot lead to a non-zero final acceptability degree (and vice-versa).} \label{contradiction-zero}
\EndIf

\If{$S(a) \neq 0$ and $w(a)/S(a) \notin L$} \label{cond-exist-max-attacker}
  \State \Return False \Comment{Contradiction by definition of the $\sigma_{MB}$ semantics.} \label{return-false-no-attack}
\EndIf
\EndFor
\State \Return True
\end{algorithmic}
\end{proof}

In the algorithm above, line \ref{line-cond-zero} checks whether arguments with zero (resp. non-zero) initial weights and non-zero (resp. zero) desired final acceptability degrees exist; the presence of such arguments would mean that the weighted-max-based semantics cannot be satisfied (line \ref{contradiction-zero}). Line \ref{cond-exist-max-attacker} then checks the existence of an attacking argument with a suitable acceptability degree (via a lookup in the set $L$) to satisfy the weighted-max-based semantics in $O(1)$ time. Since lines 1-8 iterate over all arguments, the time complexity is $O(n)$; similarly, storing $L$ takes $O(n)$ space.

\begin{figure}[h]
\centering
\begin{tikzpicture}[scale=0.1]
\tikzstyle{every node}+=[inner sep=0pt]
\draw [black] (16.9,-30.8) circle (3);
\draw (16.9,-30.8) node[label={[label distance=10] 90: 1}, label={[label distance=10] 270: \color{blue} $\frac{nm^*}{0.4T+nm^*}$}] {$a_0$};
\draw [black] (36.7,-15.5) circle (3);
\draw (36.7,-15.5) node[label={[label distance=10] 180:  \color{blue}$\frac{0.4m_1}{nm^*}$}] {$a_1$};
\draw [black] (36.7,-25.5) circle (3);
\draw (36.7,-25.5) node[label={[label distance=10] 180:  \color{blue}$\frac{0.4m_2}{nm^*}$}] {$a_2$};
\draw [black] (36.7,-46.7) circle (3);
\draw (36.7,-46.7) node[label={[label distance=10] 180: \color{blue} $\frac{0.4m_n}{nm^*}$}] {$a_n$};
\draw (36.7,-34) node {$\vdots$};
\draw [black] (58.5,-17.5) rectangle (52.9,-12.5);
\draw (55.9,-15.5) node {$m_1$};
\draw [black] (58.5,-27.5) rectangle (52.9, -22.5);
\draw (55.9,-25.5) node {$m_2$};

\draw [black] (58.5,-48.7) rectangle (52.9,-43.7);
\draw (55.9,-46.7) node {$m_n$};
\draw [black] (7.8,-32.8) rectangle (2.8, -27.8);
\draw (5.8,-30.8) node {$T$};

\draw [black,dashed] (7.8,-30.8) -- (13.9,-30.8);

\draw [black,dashed] (52.9,-15.5) -- (39.7,-15.5);

\draw [black,dashed] (52.9,-25.5) -- (39.7,-25.5);

\draw [black,dashed] (52.9,-46.7) -- (39.7,-46.7);

\end{tikzpicture}
\caption{Graphical representation of the polynomial transformation (dashed lines) of an $\Ssp$ instance (rectangular nodes) into an instance of $\Dec_c^{HC}$ (circular nodes).  The initial weight of $a_0$ is 1 (shown in black above $a_0$), and of $a_1$ to $a_n$, is equal to their final acceptability degree (where final acceptability degrees are shown in blue). The initial weights of the latter arguments have thus been omitted.}
\label{fig:ssp_to_HC}
\end{figure}
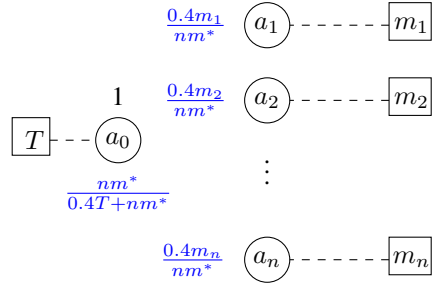

\begin{proposition}
$\Dec_c^{HC}$ is NP-complete.
\label{proposition_DEC_HC}
\end{proposition}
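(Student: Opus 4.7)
The plan is to establish NP-completeness in two stages: membership in NP and NP-hardness. For NP membership, the attack set $\D \subseteq \A \times \A$ itself serves as a polynomial-size certificate; given $\D$, the convergence of $\sigma_{HC}$ on finite frameworks (cited from \cite{AMGOUD2022103607,oren_analytical}) lets us evaluate $\sigma_{HC}^\AF(a)$ for every $a$ in polynomial time -- either by iterating the $HC_i$ operator to adequate precision or by solving the associated $|\A|$-variable fixed-point system -- and we then compare pointwise against $S$.

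For NP-hardness I would reduce from the Subset Sum Problem ($\Ssp$): given positive integers $m_1, \ldots, m_n$ and target $T$, is there $I \subseteq \{1, \ldots, n\}$ with $\sum_{i \in I} m_i = T$? Following the schema in Figure \ref{fig:ssp_to_HC}, construct a $\Dec_c^{HC}$ instance on arguments $\{a_0, a_1, \ldots, a_n\}$ with $w(a_0) = 1$, $w(a_i) = \frac{0.4 m_i}{n m^*}$ for $i \geq 1$ (where $m^* = \max_j m_j$, keeping all weights in $[0,1]$), $S(a_0) = \frac{n m^*}{0.4T + n m^*}$, and $S(a_i) = w(a_i)$ for $i \geq 1$. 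The forward direction is a single calculation: if $I$ solves the $\Ssp$ instance, take $\D = \{(a_i, a_0) : i \in I\}$, observe that $\sigma_{HC}^\AF(a_i) = w(a_i)$ since each $a_i$ ($i \geq 1$) is unattacked, and then $\sigma_{HC}^\AF(a_0) = 1/(1 + \sum_{i \in I} w(a_i)) = 1/(1 + 0.4T/(nm^*)) = S(a_0)$.

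The main obstacle is the backward direction. The crucial observation is that $\sigma_{HC}^\AF(a_i) = w(a_i) > 0$ combined with the $HC$ recurrence forces $\sum_{b \in \Att(a_i)} \sigma_{HC}^\AF(b) = 0$ for each $i \geq 1$; since every argument in the construction has positive initial weight and hence positive final acceptability degree, no $a_i$ with $i \geq 1$ can be attacked. Thus $\Att(a_0) \subseteq \{a_0\} \cup \{a_1, \ldots, a_n\}$, and every non-self-loop attacker of $a_0$ contributes exactly its own initial weight to the denominator of $\sigma_{HC}^\AF(a_0)$. The remaining delicate point is to exclude a self-loop on $a_0$: substituting $\sigma_{HC}^\AF(a_0) = S(a_0)$ into the self-loop variant of the recurrence introduces the extra term $S(a_0)$ in the denominator, and a direct arithmetic check using the specific rational form of $S(a_0)$ (denominator $0.4T + nm^*$) against the $w(a_j)$ (denominator $nm^*$) rules this configuration out. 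Once self-loops are excluded, the equation for $\sigma_{HC}^\AF(a_0)$ reduces to $\sum_{a_j \in \Att(a_0)} w(a_j) = 0.4T/(nm^*)$, which is equivalent to $\sum_{j : a_j \in \Att(a_0)} m_j = T$, yielding the required $\Ssp$ solution.
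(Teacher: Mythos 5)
Your proposal is correct and follows essentially the same route as the paper: membership in NP via the attack set as certificate, the identical reduction from $\Ssp$ with $w(a_i)=S(a_i)=0.4m_i/(nm^*)$ and $S(a_0)=nm^*/(0.4T+nm^*)$, and the same recognition that the only delicate point in the backward direction is excluding a self-loop on $a_0$ via the specific value of $S(a_0)$. The only differences are cosmetic: you take $m^*=\max_j m_j$ where the paper takes $\sum_j m_j$ (both keep the weights in range and both need the harmless assumption $T\le nm^*$), and you leave the self-loop arithmetic as a sketch where the paper makes it explicit by showing a self-loop would force $\sigma_{HC}^{\AF}(a_0)\le(\sqrt{5}-1)/2\approx 0.618$ while the construction guarantees $S(a_0)\ge 1/1.4\approx 0.714$.
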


\begin{proof}
\label{proof:DEC_HC}
    We show that $\Dec_c^{HC}$ is NP-complete using the subset-sum problem ($\Ssp$). Formally, $\Ssp$ is defined as answering the following question. Consider a multiset of positive numbers $M$ and a number $T \in \mathbb{R}^+$, is there a subset $M' \subseteq M$ such that $\sum\limits_{m' \in M'} m' = T$?
    
    To prove $\Dec_c^{HC}$ is NP-complete we must demonstrate that it is in NP and identify a polynomial time reduction from $\Ssp$ to $\Dec_c^{HC}$.
    
    To demonstrate that $\Dec_c^{HC}$ belongs to NP we observe that the certificate of the problem is a set of attacks. Given a set of attacks, we can check in polynomial time whether $S(a) = w(a)/ (1+ \sum_{b \in \Att(a)} S(b))$ for all $a \in \A$ (as we have all initial weights and the desired final acceptability degrees), therefore the problem is in NP.

    Turning to the reduction, we begin by reducing $\Ssp$ to $\Dec_c^{HC}$. Let us assume we have a multi-set of positive numbers $M = \{ m_1, \dots, m_n\}$ and $T \in \mathbb{R}^+$. We denote $\sum_i m_i$ by $m^*$. We then create a set of $n+1$ arguments $\A = \{a_0, a_1, a_2, \dots, a_n\}$ such that for all $i \in \{1, \dots , n \}$, $S(a_i) = w(a_i) = (0.4m_i)/ (nm^*)$, and we set $w(a_0) = 1$ and $S(a_0) = nm^* / (0.4T + nm^*)$. 
    This transformation is represented in Fig. \ref{fig:ssp_to_HC}.

\begin{example} \label{example:ssp2dechc}
Consider the $\Ssp$ instance where $T=100$ and $M=\{23,94,1,37,40\}$.  Here, $n=5$ and $m^*=195$. Using our transformation, we obtain the  arguments, initial weights, and final acceptability degrees shown in Table \ref{tab:examplessp}.

\end{example}

\begin{table}[h]
\centering
\begin{tabular}{|c|c |c |}
\hline
\textbf{Argument} & \textbf{Initial} & \textbf{Desired final } \\

 & \textbf{Weight} & \textbf{acceptability degree}\\
\hline
$a_0$     &  1 & 0.96059\\
\hline
$a_1$     &  \multicolumn{2}{c|}{0.00944}\\
\hline
$a_2$     &  \multicolumn{2}{c|}{0.03856}\\
\hline
$a_3$     &  \multicolumn{2}{c|}{0.00041}\\
\hline
$a_4$     &  \multicolumn{2}{c|}{0.01518}\\
\hline
$a_5$     &  \multicolumn{2}{c|}{0.01641}\\
\hline
\end{tabular}
\caption{\label{tab:examplessp} The arguments created from Example \ref{example:ssp2dechc} using the reduction described in Proposition \ref{proposition_DEC_HC}.}
\end{table}

    \begin{itemize}
        \item  We now demonstrate that --- using the above reduction --- a solution to $\Ssp$ exists only if a solution to $\Dec_c^{HC}$ exists.
        If there exists an $M'  \subseteq M$ such that $\sum\limits_{m' \in M'} m' = T$, then $\D = \{ (f(m), a_0 ) \mid m \in M' \}$ is a set of attacks such that for all $a \in \A, \sigma^\AF(a) = S(a)$, where $f$ associates the corresponding argument and $\AF = \langle \A, \D, w \rangle$. Indeed, we have for all $i \in \{1, \dots, n\}, \sigma^\AF_{HC}(a_i) = S(a_i)$ since $a_i$ is not attacked and:
    
    \begin{align*}
    \sigma^\AF_{HC}(a_0) &= \frac{1}{1 + \sum_{m \in M'} S(f(m))} \\
    &= \frac{1}{1 + \frac{0.4T}{nm^*}} \\
    &= \frac{nm^*}{nm^* + 0.4T} \\
    &= S(a_0)
    \end{align*}

    \item We now show the reduction in the other direction --- a solution to $\Dec_c^{HC}$ exists only if a solution to $\Ssp$ can be found via the above reduction. If there exists $\D \subseteq \A \times \A$ such that  for all $b \in \A, \sigma^\AF(b) = S(b)$, then $M' = \{ (S(b)nm^*)/0.4 \mid (b,a_0) \in \D \} \subseteq M$ is such that $\sum\limits_{m' \in M'} m' = T$, where $\AF = \langle \A, \D, w \rangle$. Indeed:

    \begin{align*}
        \sum\limits_{m' \in M'} m' &= \left(\sum\limits_{(b,a_0) \in \D} S(b)\right) \frac{nm^*}{0.4}\\
        &= \left(\sum\limits_{(b,a_0) \in \D} \sigma^\AF_{HC}(b)\right) \frac{nm^*}{0.4}\\
        &= \left(\frac{1}{\frac{nm^*}{0.4T+nm^*}} -1\right) \frac{nm^*}{0.4}\\
        &= \left(\frac{0.4T}{nm^*}\right)  \frac{nm^*}{0.4}\\
        &= T\\
    \end{align*}

    Note that $(a_0, a_0) \notin \D$. Indeed, if $(a_0, a_0) \in \D$ then we have that $\sigma_{HC}^\AF(a_0) = 1/(1 + \sigma_{HC}^\AF(a_0) + Y)$, where $Y = \sum_{b \in Att(a_0) \setminus \{a_0\}} \sigma_{HC}^\AF(b)$. Hence, ($\sigma_{HC}^\AF(a_0)^2 + \sigma_{HC}^\AF(a_0) -1)/(-\sigma_{HC}^\AF(a_0)) = Y$ and since $Y \geq 0$, we conclude that $0 \leq \sigma_{HC}^\AF(a_0) \leq \frac{-1 + \sqrt{5}}{2} \simeq 0.618$. However, we get a contradiction as $S(a_0) = \sigma_{HC}^\AF(a_0)$ and:

    \begin{align*}
        \frac{T}{nm^*} &\leq 1\\
        \frac{0.4T}{nm^*} &\leq 0.4\\
        \frac{0.4T+nm^*}{nm^*} &\leq 1.4\\
        S(a_0) &\geq 0.714\\
    \end{align*}
 \end{itemize}

We have proved that $\Dec_c^{HC}$ is in NP and that $\Ssp$ is polynomial time reducible to $\Dec_c^{HC}$ (the size of the argumentation framework produced is polynomial with respect to the size of the $\Ssp$ instance), therefore $\Dec_c^{HC}$ is NP-complete.
\end{proof}

\begin{example}[Cont'd Example \ref{example:ssp2dechc}]
We see that the subset sum problem has a solution (using values 23,37 and 40). Analogously, a solution to $\Dec_c^{HC}$ exists by having arguments $a_1, a_4$ and $a_5$ attack argument $a_0$, as shown in Fig. \ref{fig:ssp_to_HC_example}.

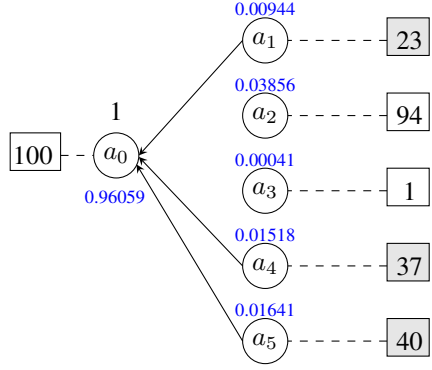
\begin{figure}
\centering
\begin{tikzpicture}[scale=0.1]
\tikzstyle{every node}+=[inner sep=0pt]
\draw [black] (16.9,-30.8) circle (3);
\draw (16.9,-30.8) node[label={[label distance=10] 90: 1}, label={[label distance=10] 268: \color{blue} \scriptsize 0.96059}] {$a_0$};
\draw [black] (36.7,-15.5) circle (3);
\draw (36.7,-15.5) node[label={[label distance=6] 90:  \color{blue} \scriptsize 0.00944}] {$a_1$};

\draw [black] (36.7,-25.5) circle (3);
\draw (36.7,-25.5) node[label={[label distance=6] 90:  \color{blue} \scriptsize 0.03856}] {$a_2$};

\draw [black] (36.7,-35.5) circle (3);
\draw (36.7,-35.5) node[label={[label distance=6] 90:  \color{blue} \scriptsize 0.00041}] {$a_3$};

\draw [black] (36.7,-45.5) circle (3);
\draw (36.7,-45.5) node[label={[label distance=6] 90:  \color{blue} \scriptsize 0.01518}] {$a_4$};

\draw [black] (36.7,-55.5) circle (3);
\draw (36.7,-55.5) node[label={[label distance=6] 90:  \color{blue} \scriptsize 0.01641}] {$a_5$};

\draw [black, stealth-] (19.9,-30.8) -- (33.7,-15.5);
\draw [black, stealth-] (19.9,-31) -- (33.7,-45.5);
\draw [black, stealth-] (19.6,-32) -- (33.7,-55.5);

\draw [black, fill=gray!20] (58.5,-17.5) rectangle (52.9,-12.5);
\draw (55.9,-15.5) node {23};
\draw [black] (58.5,-27.5) rectangle (52.9, -22.5);
\draw (55.9,-25.5) node {94};

\draw [black] (58.5,-37.5) rectangle (52.9, -32.5);
\draw (55.9,-35.5) node {1};

\draw [black, fill=gray!20] (58.5,-47.5) rectangle (52.9, -42.5);
\draw (55.9,-45.5) node {37};

\draw [black, fill=gray!20] (58.5,-57.5) rectangle (52.9, -52.5);
\draw (55.9,-55.5) node {40};

\draw [black] (9.5,-32.8) rectangle (2.8, -27.8);
\draw (5.8,-30.8) node {100};

\draw [black,dashed] (9.5,-30.8) -- (13.9,-30.8);
\draw [black,dashed] (52.9,-15.5) -- (39.7,-15.5);
\draw [black,dashed] (52.9,-25.5) -- (39.7,-25.5);
\draw [black,dashed] (52.9,-35.5) -- (39.7,-35.5);
\draw [black,dashed] (52.9,-45.5) -- (39.7,-45.5);
\draw [black,dashed] (52.9,-55.5) -- (39.7,-55.5);

\end{tikzpicture}
\caption{Representation of how a solution to an $\Ssp$ instance (represented with the gray rectangles) can be obtained from a solution to a corresponding instance from $\Dec^{HC}_c$ (the attacks drawn). Blue values for $a_1 \ldots a_5$ indicate both inital weights and final acceptability degrees, and denote final acceptability degrees for $a_0$, which has an initial weight of 1.}
\label{fig:ssp_to_HC_example}
\end{figure}

\end{example}

\begin{proposition}
$\Dec_c^{CB}$ is NP-complete.
\end{proposition}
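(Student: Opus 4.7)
The plan is to mirror the structure of Proposition~\ref{proposition_DEC_HC}, replacing the reduction from $\Ssp$ with one from the $k$-subset-sum problem $\kSsp$: given a multiset $M = \{m_1, \ldots, m_n\}$, target $T$, and positive integer $k$, is there a subset $M' \subseteq M$ of size exactly $k$ with $\sum_{m' \in M'} m' = T$? This variant is well known to be NP-complete. The reason for using $\kSsp$ rather than $\Ssp$ is that the denominator of the weighted card-based formula, $1 + |\Att^*(a)| + \frac{\sum_b S(b)}{|\Att^*(a)|}$, depends on both the number of attackers and the sum of their final degrees, so the reduction needs to encode both quantities simultaneously.

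Membership in NP is immediate: the certificate is an attack set $\D$, and given all initial weights and desired final acceptability degrees, one verifies in polynomial time that $S(a) = \frac{w(a)}{1 + |\Att^*(a)| + (\sum_{b \in \Att^*(a)} S(b))/|\Att^*(a)|}$ for every $a \in \A$, using the convention of the definition when $|\Att^*(a)| = 0$.

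For NP-hardness I plan to construct, from any instance $(M, T, k)$ of $\kSsp$ with $m^* = \sum_i m_i$, an instance of $\Dec_c^{CB}$ on arguments $\A = \{a_0, a_1, \ldots, a_n\}$ with $w(a_0) = 1$, $S(a_0) = \frac{1}{1 + k + cT/k}$, and $w(a_i) = S(a_i) = c m_i$ for $i \geq 1$, where $c$ is a scaling constant of polynomially bounded size chosen to be small enough (see the last paragraph for the precise role of this smallness). The forward direction is routine: given a $\kSsp$ solution $M' = \{m_i : i \in I\}$ of size $k$, taking $\D = \{(a_i, a_0) : i \in I\}$ leaves every $a_i$ with $i \geq 1$ unattacked, so $\sigma^\AF_{CB}(a_i) = w(a_i) = S(a_i)$, and the card-based formula for $a_0$ evaluates exactly to $\frac{1}{1 + k + cT/k} = S(a_0)$.

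The main obstacle lies in the backward direction: any $\D$ realising the target degrees must encode a $\kSsp$ solution. First, no $a_i$ with $i \geq 1$ can be attacked, since any attack would force $\sigma^\AF_{CB}(a_i) < w(a_i) = S(a_i)$. Next, letting $I$ index the non-self attackers of $a_0$ and $s = \sum_{i \in I} S(a_i)$, the fixed-point equation for $a_0$ rewrites as $|I|^2 - (k + cT/k)|I| + s = 0$; writing $\delta = |I| - k$ gives $\delta = cT/k - s/(k+\delta)$, so a sufficiently small $c$ (of size $O(1/(T+m^*))$) forces $|\delta| < 1$, hence $\delta = 0$ and $s = cT$, i.e., $\{m_i : i \in I\}$ solves $\kSsp$. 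The sub-case I expect to demand the most care is ruling out a self-loop $(a_0, a_0) \in \D$, which I would handle by the same kind of bounding argument (possibly tightening $c$ further), while assuming without loss of generality that $k \geq 2$ since $\kSsp$ with $k = 1$ is trivially in P.
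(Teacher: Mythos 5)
Your proposal is correct and follows essentially the same route as the paper: a reduction from $\kSsp$ using a star gadget around $a_0$, leaf arguments whose initial weight equals their desired degree (a scaled copy of $m_i$), and $S(a_0)$ set so that exactly $k$ attackers summing to the scaled target realise it. The only difference is cosmetic --- the paper fixes an explicit scaling constant $u$ so that $S(a_0)$ lands in $[1/(k+2),1/(k+1)]$ and thereby forces exactly $k$ attackers, whereas you use a generic small $c$ together with the integrality of $\delta=|I|-k$ to the same end, and your sketched bounding argument for excluding the self-loop $(a_0,a_0)$ does go through.
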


 \begin{proof}

Similar to the previous proof, we show that $\Dec_c^{CB}$ is NP-complete using the $k$-subset-sum problem ($\kSsp$). Formally, $\kSsp$ is defined as answering the following question. Consider a multiset of positive numbers $M$ and a number $T \in \mathbb{R}^+$, is there a subset $M' \subseteq M$ such that $\sum_{m' \in M'} m' = T$ and $|M'| = k$?




We first observe that the certificate of the problem $\Dec_c^{CB}$ is a set of attacks. Given a set of attacks, we can check in polynomial time whether $S(a) = w(a) / (1+ |\Att^*(a)| + \sum_{b \in \Att^*(a)} S(b)/|\Att^*(a)|)$ for all $a \in \A$, meaning that the problem is in NP. 

We now reduce $\kSsp$ to $\Dec_c^{CB}$. Let us assume we have a multi-set of positive numbers $M = \{ m_1, \dots, m_n\}$, $ 1 \leq k \leq n$, and $T \in \mathbb{R}^+$. We denote by $m^* = \max_{m_i \in M} m_i$ and $u = \frac{\sqrt{k^2+2k+4/k+1}-k -1}{3}$ (note that $u\in ]0, (2\sqrt{2}-2)/3]$ when $k\geq 1$). We create $n+1$ arguments $\A = \{ a_0, a_1, a_2, \dots, a_n\}$ such that for all $i \in \{ 1, \dots, n \}, S(a_i) = w(a_i) = um_i /m^*$ and we set $w(a_0) = 1$ and $S(a_0) = 1 / (1+k + Tu/(km^*)) $.

\begin{itemize}
    \item If there exists an $M' \subseteq M$ such that $\sum_{m' \in M'} m' = T$ and $|M'| = k$, then $\D = \{ (f(m), a_0) | m \in M' \}$ is a set of $k$ attacks such that for all $a \in \A, \sigma^\AF(a) = S(a),$ where $\AF = \langle \A, \D, w \rangle$. Indeed, we have for all $i \in \{1, \dots, n \}, \sigma_{CB}^\AF(a_i) = S(a_i)$ since $a_i$ is not attacked and:

\begin{align*}
    \sigma_{CB}^\AF(a_0) &= \frac{1}{1 + k + \frac{\sum_{m \in M'}S(f(m))}{k}} \\
    &= \frac{1}{1 + k + \frac{\frac{Tu}{m^*}}{k}}\\
    &= \frac{1}{1 + k + \frac{Tu}{km^*}}
\end{align*}

\item Assume we have a solution to $\Dec_c^{CB}$, i.e. we have $\D \subseteq \A \times \A$ such that for all $b \in \A, \sigma^{\AF}(b) = S(b)$. We know that $a_0$ is attacked by $k$ arguments in $\D$ as its value lies between $[1/(k+2), 1/(k+1)]$ and that arguments $a_1$ to $a_n$ are not attacked (except if they or their attackers have initial weights of $0$).
We can build $M' = \{ S(b)*m^*/u | (b,a_0) \in \D \} \subseteq M$ such that $|M'| = k$ and $\sum_{m' \in M'} m' = T$. Indeed:

\begin{align*}
    \sum_{m' \in M'} m' &= \left( \sum_{(b,a_0) \in \D} S(b) \right) \frac{m^*}{u} \\
    &= \left( \frac{Tu}{m^*} \right)\frac{m^*}{u} \\
    &= T
\end{align*}

We show that $(a_0, a_0) \notin \D$ by contradiction. Assume we have this self-attack, then the maximum value with self attack possible for $\sigma_{CB}^\AF(a_0)$ is determined by computing the unique fixed-point of the function $f(x) = 1/ (1+k + x/k)$ which is:

$$0 < \frac{-k^2 + \sqrt{k(k^3+2k^2+k+4)}-k}{2} \leq 1$$

Moreover, it holds that:

\begin{align*}
    \frac{-k^2 + \sqrt{k(k^3+2k^2+k+4)}-k}{2} &< \frac{1}{1+k+u}\\
    &< \frac{1}{1+k+\frac{Tu}{km^*}} \\
    &< S(a_0)
\end{align*}

This is a contradiction with $S(a_0) = \sigma_{CB}^\AF(a_0)$.




\end{itemize}

\end{proof}

To recap, our results show that $\Dec^{MB}_c$ is polynomial (and indeed, can be solved in linear time), while $\Dec^{HB}_c$ and $\Dec^{CB}_c$ are both NP-complete. This latter result was obtained by a reduction to a variant of the subset-sum problem.

\section{Identifying Solutions}
\label{sec:solvers}

Rather than simply considering the decision problem, it is useful --- if possible --- to be able to infer the attacks induced by some set of initial weights and final acceptability degrees. We consider each semantics individually.

\subsection{The Weighted Max-based Semantics}

For the weighted max-based semantics, it is trivial to modify the algorithm from Proposition \ref{prop:MB-poly} to infer a suitable set of attacks (when possible). Rather than simply returning True or False, we return, by modifying lines \ref{cond-exist-max-attacker} and \ref{return-false-no-attack}, any set of attacks $X$ that satisfy the condition that for all $a \in \A$, there exists $(x, a) \in X$ such that $w(a)/S(a)=(1+S(x))$. Thus, we note that this semantics does little to constrain the attacks in the framework. If we discover that there is an attack from argument $a_i$ to argument $a_0$ in a solution, then under the max-based semantics, we can expand this solution by adding additional attacks from any argument $a_j$ to $a_0$, where $S(a_j) \leq S(a_i)$, without affecting the result.

\begin{proposition}[Solution Expansion]
\label{prop:sol-exp-MB}
Given a set of arguments $\A$, a weighting function $w : \A\to [0,1]$, and a desirable final acceptability degree function $S: \A \to [0,1]$. If $\D \subseteq \A \times \A$ is such that for all $a \in \A$, $\sigma_{MB}^\AF(a) = S(a)$, where $\AF = \langle \A, \D,w \rangle$ then for all $a \in \A,\sigma_{MB}^{\AF'}(a) = S(a)$, where $\AF' = \langle \A, \D',w \rangle, \D' = \D \cup X$ and $X \subseteq \{ (a_i, a_j) \mid \exists (a_k, a_j) \in \D$ and $S(a_i) \leq S(a_k)\} \cup \{ (a_i, a_j) \mid w(a_j) = 0 \}$.
We say that $\D'$ is an expansion of $\D$.
\end{proposition}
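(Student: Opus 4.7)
The plan is to exploit the fact that $\sigma_{MB}$ computes a \emph{unique} fixed point of the max-based operator: convergence on finite weighted argumentation frameworks is guaranteed by Theorem 12 of Amgoud et al.\ (cited above), so to prove $\sigma_{MB}^{\AF'}(a) = S(a)$ for all $a\in\A$ it suffices to verify that the given function $S$ already satisfies the max-based fixed-point equation in $\AF'$, namely
\begin{align*}
S(a) \;=\; \frac{w(a)}{1 + \max_{b \in \Att_{\AF'}(a)} S(b)}
\end{align*}
for every $a \in \A$ (with the usual convention that the maximum over the empty set is $0$). The hypothesis tells us this equation already holds in $\AF$, so the whole proof reduces to comparing the max over $\Att_{\AF'}(a)$ with the max over $\Att_\AF(a)$.

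First I would dispatch the trivial case: if $\Att_{\AF'}(a) = \Att_\AF(a)$, the fixed-point equation for $a$ in $\AF'$ is syntactically the same as in $\AF$ and therefore holds. Otherwise $a = a_j$ is the target of at least one new attack in $X$, and I split into two subcases matching the two clauses in the definition of $X$. If $w(a_j)=0$, then by the hypothesis $S(a_j) = \sigma_{MB}^{\AF}(a_j)=0$; since the numerator $w(a_j)=0$ is unchanged in $\AF'$, we still have $w(a_j)/(1+\max_{b\in\Att_{\AF'}(a_j)}S(b))=0=S(a_j)$, regardless of which new attackers have been added. If $w(a_j) > 0$, then every new attacker $(a_i,a_j)\in X$ on $a_j$ must come from the first clause, so there is some $(a_k,a_j)\in\D$ with $S(a_i)\leq S(a_k)$. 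Hence every $S(a_i)$ for a new attacker is bounded above by an $S(a_k)$ of an old attacker, and since all old attackers remain present,
\begin{align*}
\max_{b \in \Att_{\AF'}(a_j)} S(b) \;=\; \max_{b \in \Att_{\AF}(a_j)} S(b),
\end{align*}
so the fixed-point equation at $a_j$ in $\AF'$ collapses to the one that already holds in $\AF$.

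The main obstacle — really the only subtlety — is that the set $X$ permits two quite different expansion mechanisms, and the invariance argument (maximum unchanged) fails in general for the zero-initial-weight clause, where a newly added attacker might have a strictly larger acceptability degree than any existing attacker. One therefore has to handle that clause via the separate observation that the numerator vanishes. Once both subcases are settled, $S$ is a fixed point of the max-based operator on $\AF'$, and uniqueness of that fixed point (i.e.\ convergence of the semantics) yields $\sigma_{MB}^{\AF'}(a) = S(a)$ for every $a \in \A$, which is exactly the claim.
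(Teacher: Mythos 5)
Your proof is correct. The paper actually states this proposition \emph{without} proof, so there is nothing to diverge from; the informal justification in the surrounding text (``we can expand this solution by adding additional attacks from any argument $a_j$ to $a_0$, where $S(a_j)\leq S(a_i)$, without affecting the result'') is precisely your key observation, and your case split is exhaustive: when $w(a_j)>0$ every new attack on $a_j$ must come from the first clause of $X$, so each new attacker is dominated by a surviving old one and the maximum over $\Att_{\AF'}(a_j)$ equals that over $\Att_{\AF}(a_j)$; when $w(a_j)=0$ the numerator vanishes and $S(a_j)=0$ is preserved no matter which attackers are added. The one step worth tightening is the appeal to Theorem 12 of Amgoud et al.: convergence guarantees that $\sigma_{MB}^{\AF'}$ exists and is \emph{a} fixed point of the max-based operator (by continuity of that operator), but your reduction ``$S$ satisfies the fixed-point equation in $\AF'$, hence $S=\sigma_{MB}^{\AF'}$'' additionally requires \emph{uniqueness} of the fixed point on $[0,1]^{\A}$, which is a separate fact from convergence. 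It is true and easy to supply: if $x$ and $y$ are two fixed points and $a$ attains $d=\max_{c\in\A}|x(c)-y(c)|>0$, then writing $M_x(a)=\max_{b\in\Att_{\AF'}(a)}x(b)$ one gets $d\leq w(a)\,d/\bigl((1+M_x(a))(1+M_y(a))\bigr)$, which forces $M_x(a)=M_y(a)=0$ and hence $d=0$, a contradiction. With that two-line lemma made explicit, the argument is complete.
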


Similarly, if we have a set of attacks that achieve the desirable final acceptability degrees for all arguments and there is an attack from $a_i$ to $a_0$, then under the max-based semantics, we can contract this solution by removing attacks from any other argument $a_j$ to $a_0$, where $S(a_j) \leq S(a_i)$.


\begin{proposition}[Solution Contraction]
\label{prop:sol-contraction}
Given a set of arguments $\A$, a weighting function $w : \A\to [0,1]$, and a desirable final acceptability degree function $S: \A \to [0,1]$. If $\D \subseteq \A \times \A$ is such that for all $a \in \A$, $\sigma_{MB}^\AF(a) = S(a)$, where $\AF = \langle \A, \D,w \rangle$ then for all $a \in \A,\sigma_{MB}^{\AF'}(a) = S(a)$, where $(a_i, a_0) \in \D, \AF' = \langle \A, \D',w \rangle, \D' = \D \setminus X$ and $X \subseteq \{ (a_j, a_0) \in \D \mid a_j \neq a_i$ and $S(a_j) \leq S(a_i)\} \cup \{ (a_j, a_0) | w(a_0) = 0 \}$.
We say that $\D'$ is a contraction of $\D$ (on $(a_i, a_0) \in \D$).
\end{proposition}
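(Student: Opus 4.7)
The plan is to show that $S$ remains a fixed point of the weighted max-based operator on the contracted framework $\AF'$, and then invoke the convergence of $\sigma_{MB}$ to conclude $\sigma_{MB}^{\AF'} = S$. The intuition parallels the expansion result (Proposition \ref{prop:sol-exp-MB}): under $\sigma_{MB}$ only the strongest attacker matters, and the contraction is designed precisely to keep a dominating attacker $a_i$ in place.

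Concretely, I would introduce the operator $F_\AF(v)(a) = w(a) / (1 + \max_{b \in \Att_\AF(a)} v(b))$ on $[0,1]^\A$, with the convention $\max \emptyset = 0$, so that the hypothesis $\sigma_{MB}^\AF = S$ reads ``$S$ is a fixed point of $F_\AF$''. Verifying that $S$ is also a fixed point of $F_{\AF'}$ splits into cases. For $a \neq a_0$ the contraction removes no edges into $a$, so $\Att_{\AF'}(a) = \Att_\AF(a)$ and $F_{\AF'}(S)(a) = F_\AF(S)(a) = S(a)$. For $a_0$ I would further split on $w(a_0)$: if $w(a_0) = 0$ then $F_\AF(S)(a_0) = F_{\AF'}(S)(a_0) = 0 = S(a_0)$; if $w(a_0) \neq 0$, the second clause in the definition of $X$ is empty, so every removed edge $(a_j, a_0) \in X$ satisfies $a_j \neq a_i$ and $S(a_j) \leq S(a_i)$. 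Since $(a_i, a_0) \in \D'$, the preserved attacker $a_i$ dominates (or ties) each removed attacker in $S$-value, hence $\max_{b \in \Att_{\AF'}(a_0)} S(b) = \max_{b \in \Att_\AF(a_0)} S(b)$, which gives $F_{\AF'}(S)(a_0) = F_\AF(S)(a_0) = S(a_0)$.

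Finally I would conclude by citing the convergence guarantee for $\sigma_{MB}$ from \cite{AMGOUD2022103607}: since the iteration converges to the unique fixed point of $F_{\AF'}$, and $S$ is a fixed point of $F_{\AF'}$, it follows that $\sigma_{MB}^{\AF'} = S$. The step I expect to be most delicate is precisely this uniqueness claim, since the cited convergence theorem formally gives only the limit of the iteration started at $w$. I would patch any residual gap by noting that $F_{\AF'}$ is non-expansive in the sup-norm and becomes strictly contracting as soon as the attacker acceptability degrees are bounded below $1$, which handles the generic case; the boundary situation where some weights equal $1$ would be treated by directly tracking the iteration from $w$ and observing that it is squeezed toward any fixed point of $F_{\AF'}$.
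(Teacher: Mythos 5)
The paper states this proposition without proof (it is one of the results the authors regard as immediate), so there is no official argument to compare against; judged on its own terms, your proposal is essentially sound. The case analysis is exactly right: edges into arguments other than $a_0$ are untouched; if $w(a_0)=0$ the value of $a_0$ is $0$ in both frameworks regardless of which attackers are removed (this also covers the fact that the second clause of $X$ may delete $(a_i,a_0)$ itself); and if $w(a_0)\neq 0$ the retained attacker $a_i$ dominates every deleted one in $S$-value, so $\max_{b\in\Att_{\AF'}(a_0)}S(b)=\max_{b\in\Att_{\AF}(a_0)}S(b)$. You are also right that the fixed-point route is the correct one --- a naive induction on the iterates $MB_i$ fails because an eliminated attacker $a_j$ may temporarily satisfy $MB_i(a_j)>MB_i(a_i)$ even though $S(a_j)\leq S(a_i)$ in the limit.

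The one step that needs tightening is the uniqueness of the fixed point of $F_{\AF'}$. Your ``squeezing from $w$'' patch does not quite work as written: non-expansiveness alone gives $\|F^n(w)-S\|_\infty\leq\|w-S\|_\infty$, which does not force the iteration toward $S$. The clean fix is to compare two putative fixed points directly. If $v\neq v'$ are both fixed points, pick $a$ attaining $\delta=\|v-v'\|_\infty>0$ and write $M=\max_{b\in\Att_{\AF'}(a)}v(b)$, $M'=\max_{b\in\Att_{\AF'}(a)}v'(b)$; then $\delta = w(a)\,|M-M'|/\bigl((1+M)(1+M')\bigr) \leq \delta\, w(a)/\bigl((1+M)(1+M')\bigr)$, which forces $w(a)=1$ and $M=M'=0$, whence $\delta\leq|M-M'|=0$, a contradiction. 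With that, $S$ and $\sigma_{MB}^{\AF'}$ are both fixed points of $F_{\AF'}$ (the latter by convergence and continuity of $F_{\AF'}$) and must coincide, completing your argument.
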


\begin{figure}
    \centering
    \includegraphics[width=0.33\textwidth]{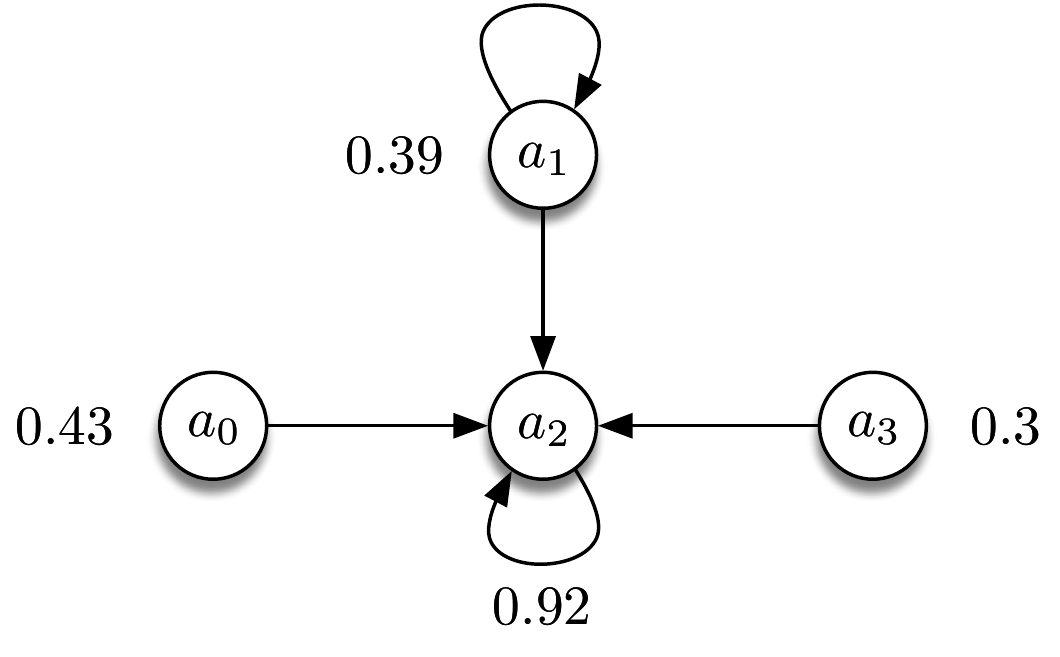}
    \caption{The argumentation framework of Example \ref{ex:eg4}.}
    \label{fig:af2}
\end{figure}

\begin{example}\label{ex:eg4}
Let us consider $\A$, $w$, $\D$ as shown in Figure \ref{fig:af2} and the desirable final acceptability degree function $S$ such that $S(a_0) = 0.43$, $S(a_1) = 0.30$, $S(a_2) = 0.58$, and $S(a_3) = 0.30$. A contraction $\D'$ of $\D$ on $(a_2, a_2)$ will remove the attacks $(a_0, a_2), (a_1, a_2)$, and $(a_3, a_2)$ from $\D$ without changing the acceptability degrees under the max-based semantics. Likewise, an expansion $\D''$ of $\D'$ can be obtained by adding the attack $(a_0, a_2)$ as this does not change the acceptability degrees. This is represented in Fig. \ref{fig:contraction}.

\begin{figure}
\centering
\includegraphics[width=0.82\textwidth]{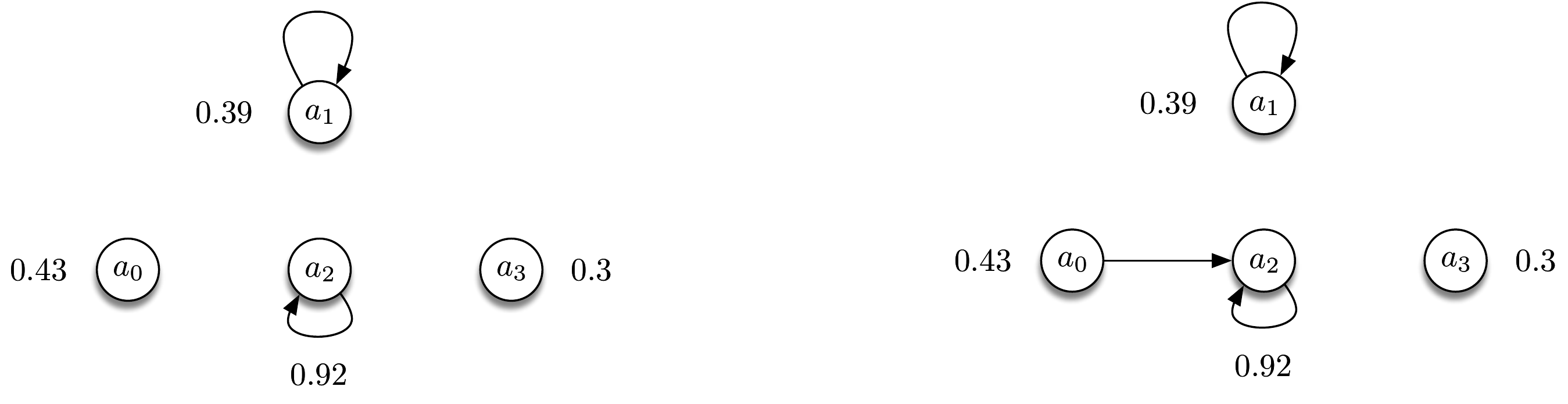}

\caption{\label{fig:contraction} Representation of $\AF' = \langle \A, \D', w \rangle$ (left), where $\D'$ is a contraction of $\D$ on $(a_2, a_2)$ and $\AF'' = \langle \A, \D'', w \rangle$ (right), where $\D''$ is an expansion of $\D'$.}
\end{figure}

\end{example}

Once a solution for the max-based semantics is found, we can reach all the other solutions by expanding this initial solution once and then successively contracting it.

\begin{proposition}
Given a set of arguments $\A$, a weighting function $w : \A\to [0,1]$, and a desirable final acceptability degree function $S: \A \to [0,1]$. If $\D,\D' \subseteq \A \times \A$ are such that for all $a \in \A$, $\sigma_{MB}^\AF(a) = S(a)$, $\sigma_{MB}^{\AF'}(a) = S(a)$ where $\AF = \langle \A, \D,w \rangle$ and $\AF' = \langle \A, \D',w \rangle$ then there exists $\D^* \subseteq \A \times \A$, such that $\D^*$ is an expansion of $\D$, and a sequence of sets of attacks $(\D_1, \D_2, \dots, \D_n)$, where $\D_1 = \D^*$, $\D_n = \D'$ and for all $1 \leq i \leq n-1, \D_{i+1}$ is a contraction of $\D_{i}$.
\label{prop:solution-MB}
\end{proposition}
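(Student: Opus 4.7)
The plan is to take $\D^* := \D \cup \D'$, show that it is a valid expansion of $\D$ via Proposition~\ref{prop:sol-exp-MB}, and then remove the attacks in $\D \setminus \D'$ one at a time using Proposition~\ref{prop:sol-contraction}. Since $\D^* \supseteq \D'$, this produces a sequence of contractions terminating at exactly $\D'$, yielding the claimed chain with $\D_1 = \D^*$ and $\D_n = \D'$.

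The structural fact underlying the argument is that any two max-based solutions to the same instance impose the same maximum-attacker constraint: for every argument $a$ with $0 < S(a) < w(a)$, the value $M_a := w(a)/S(a) - 1$ is the shared required maximum attacker degree, so each of $\D$ and $\D'$ contains at least one attacker of $a$ achieving $M_a$ and no attacker of $a$ exceeding $M_a$. With this in hand, the expansion check reduces to a case split. For each $(a_i, a_j) \in \D' \setminus \D$, either $w(a_j) = 0$, in which case the second clause of the expansion rule applies directly, or $a_j$ is effectively attacked in $\D$ by some $a_k$ with $S(a_k) = M_{a_j} \geq S(a_i)$, and the first clause applies.

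For the contraction phase, I would enumerate $\D \setminus \D' = \{e_1, \ldots, e_m\}$ in an arbitrary order, set $\D_1 = \D^*$ and $\D_{\ell+1} := \D_\ell \setminus \{e_\ell\}$, and for each $e_\ell = (a_j, a_0)$ anchor the contraction of Proposition~\ref{prop:sol-contraction} on any attacker $(a_i, a_0) \in \D'$ of $a_0$. Because $\D' \subseteq \D_\ell$ throughout the sequence, the anchor is always available; because it witnesses $S(a_i) = M_{a_0} \geq S(a_j)$, removing $e_\ell$ is a legal contraction. After $m$ steps we reach $\D_{m+1} = \D'$, giving the required sequence of length $n = m+1$.

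The main obstacle I anticipate is the degenerate case in which some $a_0$ with $w(a_0) > 0$ is attacked only by zero-weight arguments in exactly one of $\D$ or $\D'$: such attacks leave every $\sigma_{MB}^\AF$ value unchanged, yet the expansion clause (which needs a dominating preexisting attacker) and the contraction clause (which must preserve an anchor of positive-weight-compatible $S$-value) do not directly permit their insertion or removal when the other solution supplies no positive-weight partner. I would address this by first stripping all such purely cosmetic zero-weight attacks from both $\D$ and $\D'$, running the expansion-then-contraction argument on the cleaned solutions, and then arguing from the definition of $\sigma_{MB}$ that reinstating or discarding these cosmetic attacks preserves the acceptability function, so that a short padding at either end of the sequence recovers the full chain from an expansion of $\D$ to $\D'$.
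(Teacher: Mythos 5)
Your proof is correct and follows essentially the same route as the paper's: take $\D^* = \D \cup \D'$, justify the expansion via the shared maximum-attacker value $w(a)/S(a)-1$, and contract away $\D\setminus\D'$ (the paper batches the removals per target argument rather than per edge, which is immaterial). The degenerate zero-weight-attacker case you flag is real and is silently glossed over in the paper's own proof, so your explicit treatment of it is a point in your favour rather than a divergence.
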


\begin{proof}   
Let us consider $\A = \{a_1, a_2, \dots, a_n\}$ and two arbitrary solutions $\D, \D' \subseteq \A \times \A$ for $\Dec_c^{MB}$. We prove this proposition by construction.
By definition, for all $a \in \A, \sigma_{MB}^\AF(a) = \sigma_{MB}^{\AF'}(a) = S(a)$, where  $\AF = \langle \A, \D,w \rangle$ and $\AF' = \langle \A, \D',w \rangle$.
This means that for all $a \in \A$, such that $w(a) \neq 0, \max_{ b \in \Att_\AF(a)} S(b) = \max_{ b \in \Att_{\AF'}(a)} S(b)$.
Thus, $\D^* = \D \cup \D'$ is an expansion of $\D$.
Then, for all $1 \leq i \leq n$, it holds that $\D_{i+1} = \D_i \setminus \{ (a_j, a_i)\in \D | (a_j ,a_i) \notin \D' \}$ is a contraction of $\D_{i}$, where $\D_1 = \D^*$. It is clear that $\D_{n+1}= \D'$ as all attacks in $\D \setminus \D'$ have been removed from $\D^*$.
\end{proof}

\subsection{The Weighted H-Categoriser Semantics}

Let us consider the case of the weighted h-categoriser semantics. 
Given an input $\A = \{a_1, \dots, a_n\}$, an initial weight function $w$, and a desirable final acceptability degree function $S$; we seek a set of attacks $\D \subseteq \A \times \A$ such that for all $a \in \A, \sigma_{HC}^\AF(a) = S(a)$, where $\AF = \langle\A, \D, w \rangle$.

Recall from the definition of the weighted h-categoriser semantics that any solution must mean that the following equation holds.

$$S(a_i) = \frac{w(a_i)}{1+\sum_{b \in \Att(a_i)} S(b)}$$

Moreover, if $w(a_i) \neq 0$, we can rearrange the previous equation to obtain:

$$\sum_{b \in \Att(a_i)} S(b) = \frac{w(a_i)-S(a_i)}{S(a_i)}$$

For the complete problem, we are given $w(a_i)$ and $S(a_i)$ for all $a_i \in \A$, allowing us to compute $\sum_{b \in \Att(a_i)} S(b) $ via the above equation. In other words, for every argument $a_i$, we can compute a target value $T=\sum_{b \in \Att(a_i)} S(b)$ for which the final acceptability degrees of all its attackers must sum up to. By considering the multi-set $M = \{ S(a_j) \mid a_j \in \A\}$, we must solve a single subset-sum problem to identify the attackers for a single argument $a_i$. Extending this to all $n=|\A|$ arguments in our framework, we can therefore identify all attacks in the framework by solving $n$ versions of the subset-sum problem. Algorithm \ref{alg:decHC} provides the pseudo-code for this approach.

\begin{algorithm}
\begin{algorithmic}[1]
\Require $\A$ a set of arguments
\Require $w:\A \to [0,1]$ the initial weights for each argument
\Require $S:\A \to [0,1]$ the desired final acceptability degrees for each argument
\Function{Solve$_{HC}$}{$\A,w,S$}
\State $D=\{\}$
\State $M \gets \{S(b)|b \in \A \}$
\ForAll{$a \in \A$}

    \If{($w(a) = 0 $ and $S(a) \neq 0$) or ($w(a) \neq 0 $ and $S(a) = 0$)}
        \State \Return False
    \EndIf

    \If{$w(a) \neq 0$ and $w(a) \neq S(a)$}
  \State $T \gets \frac{w(a)-S(a)}{S(a)}$
  \If{\Call{SSP}{$T,M$} = False} \label{alg:call-ssp}
    \State \Return False
  \EndIf    
  \State $D=D \cup \{ (b,a) | S(b) \in $\Call{SSP}{$T,M$}$ \}$ \label{alg:call-ssp2}
  \EndIf
\EndFor
\State \Return $D$ 
\EndFunction
\end{algorithmic}
\caption{Procedure to solve $\Dec_c^{HC}$. SSP (line \ref{alg:call-ssp}, \ref{alg:call-ssp2}) calls a subset-sum solver which returns the elements of $M$ that sum up to $T$, or False if no such elements can be found.}
\label{alg:decHC}
\end{algorithm}

We note in passing that the standard version of the subset-sum problem assumes that $T$ and elements of $M$ are all integers. If we assume that all initial weights and final acceptability degrees are rational, we can easily transform our computed $T$ and $M$ values to integers. The most common --- dynamic programming --- approach to solving standard subset-sum can run in pseudo-polynomial time. However, the integers we obtain become very large very quickly, making this approach impractical. Future work will consider using state-of-the-art techniques for solving subset-sum over real numbers instead, e.g. the recent FPTAS of Costandin \cite{costandin2021fptas}. In addition, having transformed our problem to an instance of the subset-sum problem opens up the possibility of transforming the problem to other NP-complete problems for which efficient solvers exist, such as satisfiability.

Moreover, once a set of attacks is found to be a solution, we can obtain other solutions by replacing the attacks to a single argument $x$ by other attacks to this argument so that the sum of the degree of the attackers remains the same. Given the simplicity of this proposition, we do not provide a proof for it here.

\begin{proposition}
\label{prop:solution-HC}
Consider two weighted argumentation frameworks $\AF = \langle \A, \D, w \rangle$, $\AF' = \langle \A, \D', w \rangle$, and a desirable final acceptability degree function $S: \A \to [0,1]$ such that $\sigma^{\AF}_{HC}(a)=S(a)$ for any $a \in \A$. If all of the following hold
\begin{itemize}
    \item $x \in \A$
    \item $Z \subseteq \A \times \{x\}$
    \item $\sum_{(z,x) \in Z} S(z) = (w(x)-S(x))/S(x)$ if $w(x) \neq 0$
    \item $\D'=(\D \cap (\A \times (a \setminus \{x\} ))) \cup Z$
\end{itemize}
then for all $a \in \A, \sigma^{\AF'}_{HC}(a)=S(a)$.
\end{proposition}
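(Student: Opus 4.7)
The plan is to exploit the fact (implicit in the convergence results cited from Amgoud et al.\ \cite{AMGOUD2022103607} and Oren et al.\ \cite{oren_analytical}) that for a finite weighted argumentation framework, the h-categoriser fixed-point equation has a unique solution, namely $\sigma^{\AF}_{HC}$. Hence, to show that $\sigma^{\AF'}_{HC}(a)=S(a)$ for every $a \in \A$, it is enough to verify that the function $S$ itself satisfies the h-categoriser fixed-point equation on $\AF'$, i.e.\ that for every $a \in \A$,
\[
S(a) \;=\; \frac{w(a)}{1+\sum_{b \in \Att_{\AF'}(a)} S(b)}.
\]

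I would then split the verification into two cases according to whether $a=x$ or $a\neq x$. For $a\neq x$, the construction $\D'=(\D\cap(\A\times(\A\setminus\{x\})))\cup Z$ together with $Z\subseteq \A\times\{x\}$ implies $\Att_{\AF'}(a)=\Att_{\AF}(a)$, so the fixed-point equation at $a$ in $\AF'$ is literally the same equation as in $\AF$; since $S(a)=\sigma^{\AF}_{HC}(a)$ by hypothesis, it holds. For $a=x$ with $w(x)\neq 0$, I would observe that $\Att_{\AF'}(x)=\{z\mid (z,x)\in Z\}$ and then simply rearrange the algebraic hypothesis $\sum_{(z,x)\in Z} S(z)=(w(x)-S(x))/S(x)$, multiplying through by $S(x)$ and adding $S(x)$, to obtain $S(x)\bigl(1+\sum_{(z,x)\in Z} S(z)\bigr)=w(x)$, which is exactly the desired fixed-point equation at $x$ in $\AF'$.

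The remaining subcase is $a=x$ with $w(x)=0$. Here the h-categoriser definition forces $\sigma^{\AF}_{HC}(x)=0$ regardless of $\Att_{\AF}(x)$, so $S(x)=0$, and the fixed-point equation at $x$ in $\AF'$ reads $0 = 0/(1+\sum_{(z,x)\in Z} S(z))$, which is trivially true. Combining the three cases yields that $S$ is a fixed point of the h-categoriser equations on $\AF'$; by uniqueness of the fixed point, $S = \sigma^{\AF'}_{HC}$.

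The main (mild) obstacle is the reliance on uniqueness of the fixed point: without it, satisfying the equations does not identify $S$ with the semantics' output. I would therefore explicitly invoke the convergence results cited earlier, noting that they furnish not only existence but also uniqueness of the fixed point on finite frameworks, so that the verification above suffices. Beyond that, the proof is a routine algebraic check and a separation into the three cases above; no nontrivial combinatorics or analysis is required.
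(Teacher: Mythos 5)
The paper deliberately omits a proof of this proposition (``Given the simplicity of this proposition, we do not provide a proof for it here''), so there is nothing to compare against directly; your argument is correct and is clearly the verification the authors had in mind: the construction of $\D'$ leaves $\Att(a)$ unchanged for $a \neq x$, replaces $\Att(x)$ by the sources of $Z$, and the algebraic hypothesis is exactly the rearranged fixed-point equation at $x$ (with the $w(x)=0$ subcase trivially forcing $S(x)=0$). Your one worry --- that you need uniqueness of the fixed point, not just convergence --- is not in fact a gap: uniqueness follows from the cited convergence together with the antitonicity of the h-categoriser update map $F(\sigma)(a)=w(a)/(1+\sum_{b\in\Att(a)}\sigma(b))$. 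Any fixed point $\sigma^*$ satisfies $\sigma^* \leq w = HC_0$, and applying $F$ repeatedly gives $HC_{2i} \geq \sigma^* \geq HC_{2i+1}$ for all $i$, so the (convergent) iteration is sandwiched around $\sigma^*$ and its limit must equal $\sigma^*$. Adding that one-line observation would make your proof fully self-contained.
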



\subsection{The Weighted Cardinality-based Semantics}

We can adopt a similar approach for $\sigma_{CB}$ as was done for $\sigma_{HC}$. We begin by assuming that there are $k$ attacks from arguments with non-zero initial weight against an argument $a_i$. Any solution must mean that the following equation holds:

$$S(a_i) = \frac{w(a_i)}{1+k+\frac{1}{k}\sum_{b \in \Att(a_i)} S(b)}$$

If $w(a_i) \neq 0$, rearranging gives us the equation:
$$
\sum_{b \in \Att(a_i)} S(b) = -\frac{k(S(a_i)k+S(a_i)-w(a_i))}{S(a_i)}
$$

Again, all values on the right-hand side of the equation are given for the complete problem given, allowing us to compute the value of the left-hand side for each argument. Assuming a non-zero number of attacks, while we could (naively) iterate over all $k=1 \ldots n$ to identify the number of attacks against an argument, and thereby determine $k$, we observe that if 
$S(b) = \epsilon >0 $, $\lim{\epsilon} \to 0$,
for all $b \in \Att(a_i)$ the semantic equation reduces to the following.
$$S(a_i)=\frac{w(a_i)}{1+k}$$
Solving for $k$, we see that:
$$k=\frac{w(a_i)-S(a_i)}{S(a_i)}$$
Now assume instead that $S(b)=1$ for all $b \in \Att(a_i)$. Performing the same operations means that:
$$k=\frac{w(a_i)-2S(a_i)}{S(a_i)}$$

These two equations serve as an upper and lower bound for $k$. Since $k$ must be an integer, we can avoid the need to iterate over the number of attacks by computing $k$ as follows.
$$k = \lfloor \frac{w(a_i)-S(a_i)}{S(a_i)} \rfloor = \lceil \frac{w(a_i)-2S(a_i)}{S(a_i)} \rceil$$

The pseudo-code for this approach is described in Algorithm \ref{alg:dec_CB}.

\begin{algorithm}
    \begin{algorithmic}[1]
\Require $\A$ a set of arguments
\Require $w:\A \to \mathbb{Q}$ the initial weights for each argument
\Require $S:\A \to \mathbb{Q}$ the final acceptability degree for each argument
\Function{Solve$_{CB}$}{$\A,w,S$}
\State $D=\{\}$
\State $M \gets \{S(b)|b \in \A \}$
\ForAll{$a \in \A$}

    \If{($w(a) = 0 $ and $S(a) \neq 0$) or ($w(a) \neq 0 $ and $S(a) = 0$)}
        \State \Return False
    \EndIf

    \If{$w(a) \neq 0$ and $w(a) \neq S(a)$}
        \State $k \gets \lfloor \frac{w(a)-S(a)}{S(a)} \rfloor$
        \State $T \gets -\frac{k(S(a)k+S(a)-w(a))}{S(a)}$
        \If{\Call{kSSP}{$T,M,k$} = False} \label{alg:call-kssp}
            \State \Return False
        \EndIf
        \State $D \gets D \cup \{(b,a)|S(b) \in $\Call{kSSP}{$T,M,k$}$ \}$ \label{alg:call-kssp2}
    \EndIf
\EndFor  
\State \Return $D$
\EndFunction
\end{algorithmic}
\caption{Procedure to solve $\Dec_c^{CB}$. kSSP (line \ref{alg:call-kssp}, \ref{alg:call-kssp2}) calls a subset-sum solver which returns $k$ elements of $M$ which sum up to $T$.}
\label{alg:dec_CB}
\end{algorithm}

Since $k$ is known, the $\kSsp$ solver called in Algorithm \ref{alg:dec_CB} could potentially be more efficient than that used in the h-categoriser semantics. 

Analogous to the result from Proposition \ref{prop:solution-HC}, under the cardinality semantics, once a set of attacks is found to be a solution, we can obtain other solutions by replacing the $k$ attacks on a single argument $x$ by another set of $k$ attacks to this argument so that the sum of the final acceptability degree of the attackers remains the same. Given the simplicity of this proposition, we do not provide a proof for it here.

\begin{proposition}

Given two weighted argumentation frameworks $\AF = \langle \A, \D, w \rangle , \AF' = \langle \A, \D', w \rangle$, and a desirable final acceptability degree function $S: \A \to [0,1]$ such that for all $a \in \A$, $\sigma_{CB}^\AF(a) = S(a)$.
If all the following hold:

\begin{itemize}
    \item $x \in \A$
    \item $Z \subseteq \A \times \{x \}$, and
    \item $\D' = \left(\D \cap (\A \times (\A \setminus \{ x \})) \right) \cup Z$,
    \item $|Z| = \lfloor \frac{w(a)-S(a)}{S(a)} \rfloor$ and $\sum_{(z,x) \in Z} S(z) = -\frac{k(S(x)k+S(x)-w(x))}{S(x)}$ if $w(x) \neq 0$.
\end{itemize}
then for all $a \in \A$, it is the case that $\sigma_{CB}^{\AF'}(a) = S(a)$


\label{prop:solution-CB}
\end{proposition}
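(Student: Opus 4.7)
The plan is to verify that $S$ remains a fixed point of the $\sigma_{CB}$ update operator on $\AF'$, and then invoke uniqueness of this fixed point (which follows from convergence of $\sigma_{CB}$ on finite frameworks, Theorem 17 of Amgoud et al.\ as cited earlier) to conclude $\sigma_{CB}^{\AF'}(a) = S(a)$ for every $a \in \A$.

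First I would observe that $\D'$ differs from $\D$ only in the attacks whose target is $x$: the construction $\D' = (\D \cap (\A \times (\A \setminus \{x\}))) \cup Z$ leaves every attack into any other argument untouched. Hence for every $a \in \A \setminus \{x\}$, we have $\Att^*_{\AF'}(a) = \Att^*_\AF(a)$, so the $\sigma_{CB}$ fixed-point equation at $a$ is inherited verbatim from the hypothesis that $S$ is a fixed point on $\AF$; no further work is needed for these arguments.

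Next I would check the equation at the single affected argument $x$. Assuming $w(x) \neq 0$ (the case $w(x) = 0$ forces $S(x) = 0$ and is immediate), set $k = |Z|$ and $T = \sum_{(z,x) \in Z} S(z)$. The hypotheses give $k = \lfloor (w(x) - S(x))/S(x) \rfloor$ and $T = -k(S(x)k + S(x) - w(x))/S(x)$. A short rearrangement yields $T/k = -(k + 1 - w(x)/S(x))$, hence $1 + k + T/k = w(x)/S(x)$, so
\[
\frac{w(x)}{1 + k + T/k} \;=\; \frac{w(x)}{w(x)/S(x)} \;=\; S(x),
\]
which is precisely the $\sigma_{CB}$ equation at $x$.

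The main subtlety I expect is ensuring that $|\Att^*_{\AF'}(x)|$ actually equals the intended $k = |Z|$. Strictly, this requires every argument appearing in $Z$ to have positive initial weight, since only such arguments are counted in $\Att^*$. I would handle this by reading the hypothesis as implicitly restricting $Z$ to arguments with $w(\cdot) > 0$, matching the convention used in Algorithm \ref{alg:dec_CB} and the analogous statement for $\sigma_{HC}$ in Proposition \ref{prop:solution-HC}; any extra attacks from zero-weight arguments are semantically inert under $\sigma_{CB}$ and can be folded in without disturbing the equation. Once the fixed-point equation has been verified at every $a \in \A$, uniqueness of the limit of the $CB_i$ iteration delivers $\sigma_{CB}^{\AF'}(a) = S(a)$ for all $a \in \A$, completing the argument.
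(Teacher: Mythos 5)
The paper offers no proof of this proposition (it is explicitly omitted as ``simple''), so there is nothing to diverge from: your fixed-point verification is exactly the intended argument, and it is correct. You rightly observe that $\D'$ only alters the attackers of $x$, so the $\sigma_{CB}$ equation at every $a \neq x$ carries over unchanged, and your algebra at $x$ (deriving $1 + k + T/k = w(x)/S(x)$) is right. Two remarks. First, you correctly flag the one genuine imprecision in the statement: the CB semantics counts only $\Att^*(x)$, the attackers with strictly positive weight, so the hypothesis $|Z| = \lfloor (w(x)-S(x))/S(x)\rfloor$ must be read as counting only the positive-weight sources in $Z$ (and the statement's ``$w(a)$, $S(a)$'' is a typo for ``$w(x)$, $S(x)$''); your reading matches the convention the paper uses in Proposition \ref{prop:solution-HC} and Algorithm \ref{alg:dec_CB}. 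Second, your parenthetical that uniqueness of the fixed point ``follows from convergence'' is slightly loose --- convergence of the $CB_i$ iteration and uniqueness of the solution to the fixed-point equations are distinct facts --- but both do hold for $\sigma_{CB}$ on finite frameworks, and the paper itself relies on this equivalence throughout (e.g., its NP-membership arguments verify membership by checking the fixed-point equations), so the appeal is legitimate even if the attribution is imprecise.
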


Propositions \ref{prop:sol-exp-MB}, \ref{prop:sol-contraction}, and \ref{prop:solution-MB} demonstrate that there are typically a large number of non-minimal argumentation frameworks for the weighted max-based semantics. For the weighted h-categoriser semantics, attacks can be substituted as long as the sum of the final acceptability degrees of the attacking arguments match (see Proposition \ref{prop:solution-HC}), while for weighted cardinality  semantics, attacks can only be substituted by a set of attackers of the same size which final acceptability degree sums to the same value (see Proposition \ref{prop:solution-CB}). We also note that, for all semantics, any argument with an initial weight of 0 can clearly have any number of attacks against it.


\section{Discussion and Future Work}
\label{sec:Discussion}

To this point, we have considered only the complete problem of attack inference in gradual argumentation frameworks, and we now briefly discuss the partial problem. Here, we are given a semantics, a set of arguments, a partial mapping between arguments and initial weights, and a partial mapping between arguments and final acceptability degrees, and we must determine whether attacks (and initial weights which were not provided) can be identified which make the given final acceptability degrees consistent with the semantics.

Since the complete problem is a special case of this partial problem, and since we can still verify correctness in polynomial time, the NP-completeness results of Section \ref{sec:infer_attacks} still apply to the $CB$ and $HC$ semantics. The behaviour of the $MB$ version of the partial problem --- which was polynomial in the complete problem --- is more challenging to characterise. While it is easy to show that there are polynomial instances of the partial problem (e.g., when all initial weights are given and all but one final acceptability degree is provided), we strongly believe that the general case is NP-complete. We leave proof of this result as an element of future work. Other directions for research include providing a mapping from subset-sum to other problems for which solvers exist (e.g., satisfiability) and evaluating the performance of such solvers\footnote{An implementation of our algorithms using an optimised depth-first-search SSP solver can be found at \url{https://github.com/jhudsy/Gradual\_Attack\_Inference}. We do not provide experimental data as the performance of our solver demonstrates the exponential growth of the underlying subset-sum problem. Systems with more than $\sim$13 arguments can only rarely be solved in reasonable time using our implementation.}. Argumentation researchers have found that this approach has often yielded excellent results \cite{thimm_first_2017}. In addition, it may be interesting to evaluate such solvers on different (given) underlying graph topologies. Finally, we focused on three gradual semantics in this paper. However, many other weighted semantics have been proposed (e.g., \cite{TB-sem,gabbay_equilibrium_2015,bonzon_comparative_2016,matt_game-theoretic_2008,leite_social_2011}), and extending the current work to these, and in particular, to probabilistic semantics \cite{li11probabilistic,hunter17probabilistic} would be interesting.

We can consider another inverse problem, namely the computation of semantics for a given weighted argumentation framework and set of final acceptability degrees or preferences over arguments. We note in passing that this problem is trivial to solve in polynomial time, as one simpy needs to check for consistency between the inputs (weighted argumentation framework) and outputs (final acceptability degrees or preferences) for each semantics.

Turning to related work, researchers have examined the notion of \emph{argument realisability} in abstract argumentation frameworks \cite{linsbichlercharacterizing}, seeking to identify an attack relation that yields a specific labelling or extension. Recent work by Mumford et al.~\cite{mumford2022complexity} has shown that --- for complete semantics and IN/OUT labellings --- the problem is NP-complete, whereas it can be solved in polynomial time if UNDEC labellings are also allowed.
The work of Skiba et al. \cite{DBLP:conf/comma/SkibaTRHK22} focuses on whether --- for a given ranking and ranking-based semantics --- we can find an unweighted argumentation framework such that the selected ranking-based semantics induces the ranking when applied to the argumentation framework. They show that the above problem is true for a number of ranking-based semantics, including burden-based and discussion-based semantics \cite{amgoud_ranking-based_2013}, the simple product semantics on social abstract frameworks \cite{leite_social_2011,bonzon_comparative_2016}, and the probabilistic graded semantics \cite{DBLP:conf/comma/ThimmCR18,DBLP:conf/tafa/LiON11}.
Another related area to the current work is \emph{argument synthesis} \cite{niskanen2019synthesizing}, where the attack relation must satisfy some positive and negative constraints, but where the problem is not fully constrained. There are still several open questions regarding the time complexity of these types of problems, even for the case of abstract argumentation semantics \cite{mumford2022complexity}. 

More generally, our work can be seen as a type of inverse argumentation problem. \cite{inverse_problem,oren_analytical} has examined one such inverse problem in the context of the gradual semantics, seeking to identify a set of initial weights given a semantics, a set of arguments, attacks and final acceptability degrees. In the context of abstract argumentation, such inverse problems have examined inferring preferences from justified arguments \cite{mahesar_computing_2018,MahesarOV20}, and has applications in the context of belief revision \cite{baumann2015agm}.

The current paper focuses on the complexity of the underlying decision problem and inferring attacks given the semantics, initial weights, and final acceptability degrees. This --- in a sense --- is unrealistic, and limits the applications of the current work. In most uses of gradual semantics, one considers the preferences obtained from final acceptability degrees rather than the final acceptability degrees themselves, as the latter are not (normally) exposed within an application. Thus, as future work, we intend to consider the problem of inferring attacks from a semantics, initial weights, and preferences over arguments. We believe that the complexity of this problem for a given semantics will be similar to that obtained in the current work. Given this more general problem one could --- for example --- determine whether an agent is rational given the initial weights they assign to a set of arguments, a semantics, and a resultant preference ordering over the arguments. Other applications, as discussed above, include opponent modelling \cite{reinstra13opponent} in the context of dialogue and preference elicitation.

\section{Conclusions}
\label{sec:conclusion}

We have considered the problem of inferring an attack relation given a gradual semantics and all initial weights and final acceptability degrees for an argumentation framework. We have shown that for the weighted max-based semantics, this problem can be solved in linear time, but that the obtained solution is (typically) not unique. For the h-categoriser and cardinality semantics --- where solutions have no redundancies --- the problem becomes NP-complete. Our proofs are based on a reduction to variants of the subset-sum problem, and efficient solvers for this problem can be applied to our work, facilitating its application in domains such as opponent modelling. Finally, we have focused on the complete problem, and there are exciting avenues for future research dealing with its partial form.

\bibliographystyle{unsrt}  
\bibliography{bibliography}

\end{document}